\newcommand{\nameHbs}{\mathtt{Hbs}}
\newcommand{\nameMax}{\mathtt{Max}}
\newcommand{\nameCar}{\mathtt{Car}}
\newcommand{\Hbs}{\sigma^{\nameHbs}}
\newcommand{\Max}{\sigma^{\nameMax}}
\newcommand{\Car}{\sigma^{\nameCar}}
\newcommand{\AS}{\mathbf{AF}}
\newcommand{\WAS}{\mathbf{wAF}}
\newcommand{\CAF}{\mathbf{cAF}}
\newcommand{\A}{\mathcal{A}}
\newcommand{\C}{\mathcal{C}}
\newcommand{\Att}{Att}
\newtheorem{example}{Example}
\newtheorem{theorem}{Theorem}
\newtheorem{corollary}[theorem]{Corollary}
\newtheorem{proposition}[theorem]{Proposition}
\newtheorem{definition}{Definition}
\title{Eliciting Rational Initial Weights in Gradual Argumentation
}
\author{
  Nir Oren\\
  University of Aberdeen \\
  Scotland \\
  \texttt{n.oren@abdn.ac.uk} \\
   \And
  Bruno Yun \\
  Universite Claude Bernard Lyon 1,\\
  CNRS, Ecole Centrale de Lyon,\\
  INSA Lyon, Université Lumière Lyon 2, LIRIS, UMR5205\\
  69622 Villeurbanne\\
  \texttt{bruno.yun@univ-lyon1.fr} \\
}
\begin{document}
\maketitle

\begin{abstract}
Many semantics for weighted argumentation frameworks assume that each argument is associated with an initial weight.  However, eliciting these initial weights poses challenges: (1) accurately providing a specific numerical value is often difficult, and (2) individuals frequently confuse initial weights with acceptability degrees in the presence of other arguments.
To address these issues, we propose an elicitation pipeline that allows one to specify acceptability degree intervals for each argument. By employing gradual semantics, we can refine these intervals when they are rational, restore rationality when they are not, and ultimately identify possible initial weights for each argument.
\end{abstract}

\section{Introduction}


After the seminal work of \cite{dung_acceptability_1995}, the argumentation community's focus has shifted to the relation between arguments, abstracting the content of the arguments. Thus, debates or discussions are represented as directed graphs, with arguments as nodes and attacks between arguments as directed edges. 
Several \textit{extension-based semantics} have been defined to obtain conclusions from argumentation graphs. Such semantics identify  subsets of arguments (called extensions), representing consistent conclusions   \cite{baroni_introduction_2011,gaggl_cf2_2013,dvorak_comparing_2012}.

Motivated by the work of \cite{amgoud_ranking-based_2013}, researchers started to focus on semantics which could give a more gradual view on arguments' acceptability by ranking them from the ``\textit{less attacked}'' to the most (not necessarily based on the cardinality or quality of the attackers) \cite{bonzon_combining_2018}. A subset of these semantics, called \textit{gradual semantics}, assign to each argument, a numeric degree representing its strength after taking into account the relation between arguments (e.g., \cite{besnard_logic-based_2001}). More recent works expand gradual semantics with supports \cite{rago_discontinuity-free_2016}, sets of attacking arguments \cite{DBLP:conf/aaai/YunVC20}, and initial weights \cite{amgoud_weighted_2018}.
\textit{But, where do these initial weights come from?}

When asking people to elicitate the initial weights of arguments, people often struggle to identify an exact value. Complicating this further, in a setting where multiple arguments are presented, people may struggle to differentiate between the initial weight of an argument, and its final acceptability degree (which takes into account the other arguments). We therefore argue that --- from a knowledge engineering point of view --- it may make sense to elicit a single interval from a user which in some sense captures both the argument's initial weight and final acceptability degree. 

In this paper, we investigate how to narrow down an argumentation system's final acceptability degrees according to some desirable properties by using the elicited value as well as the structure of the argumentation framework. If no such rational intervals exist, we seek to find a rational value which requires minimal change from the elicited values. We also provide an implementation and evaluation of our approach\footnote{A user interface demonstrating the system is available at \url{https://tinyurl.com/3mb228fr}. 
Source code is in supplementary material.}.

In Section \ref{sec:background}, we recall the background on formal argumentation and some well-known gradual semantics. Section \ref{sec:CAF} sets up the theoretical framework for constrained argumentation frameworks, where arguments are associated with intervals, and defines the notion of rationality for them. In Section \ref{sec:refine-intervals}, we explain how constrained argumentation frameworks can be refined for a gradual semantics and we explore different strategies for correcting irrationality in Section \ref{sec-correct-irrationality}. Section \ref{sec:eval} evaluates the algorithms presented in the paper. Sections \ref{sec:related_worl} and \ref{sec:conclusion} discuss related  and future work.

\section{Background}
\label{sec:background}

We first introduce the necessary argumentation concepts.

A weighted argumentation framework is composed of arguments, attacks between arguments, and a weighting function associating each argument to an initial weight representing its trustworthiness or certainty degree of its premises.

\begin{definition}[Weighted Argumentation Framework]
An weighted argumentation framework (WAF) is $\WAS = (\A,\C, w )$, where $\A$ is a finite set of arguments, $\C \subseteq \A \times \A$ is a set of attacks between arguments, and $w$ is a weighting function from $\A$ to $[0,1]$. 
\end{definition}

Given $\WAS = (\A,\C, w)$ and $a \in \A$, $\Att(a) = \{ b \in \A \mid (b,a) \in \C)$ and $\Att^*(a) = \{ b \in Att(a) \mid w(a) \neq 0 \}$.
%
Let $x,y \in \A$. A \textit{path} from $y$ to $x$ is a sequence $\langle x_{0},\dots,x_{n} \rangle$ of arguments such that $x_{0} = y$, $x_{n} = x$ and $\forall i$ s.t. $0 \le i < n, (x_{i},x_{i+1}) \in \C$.

The usual Dung's extension-based semantics~\cite{dung_acceptability_1995} induces a two-levels acceptability of arguments (inside or outside of one or all extensions). Gradual semantics (and ranking-based semantics in general) have been proposed as a more fine-grained approach to argument acceptability~\cite{DBLP:conf/sum/AmgoudB13a,BonzonDKM23} and their weighted versions have been defined (e.g., \cite{amgoud_evaluation_2022}).
These gradual semantics use a weighting to assign to each argument in the weighted argumentation framework a score, called (acceptability) degree.

\begin{definition}[Weighted Gradual semantics] \label{def:gradualSemantics}
    A weighted gradual semantics is a function $\sigma$ which associates to each $\WAS = (\A,\C,w)$, a weighting $\sigma_\WAS: \A \to [0,1]$ on $\A$. $\sigma_\WAS(a)$ is the degree of $a$. 
\end{definition}

Let us now recall the well-known gradual semantics studied in this paper~\cite{besnard_logic-based_2001,DBLP:conf/ijcai/OrenYVB22,amgoud_evaluation_2022}.
The weighted h-categorizer semantics ($\nameHbs$) assigns a value to each argument by taking into account the initial weight of the argument, the degree of its attackers, which themselves take into account the degree of their attackers.

\begin{definition}[Weighted h-categorizer]\label{h-categorizer}
The weighted h-categorizer semantics is a gradual semantics $\Hbs$ s.t. for any $\WAS = (\A,\C,w)$ and $a \in \A$, $\Hbs_\WAS(a) = w(a)/(1+ \Sigma_{b \in \Att(a)} \Hbs_\WAS(b))$.
\end{definition}

The weighted Card-based semantics ($\nameCar$) favors the number of attackers over their quality. 
This semantics is based on a recursive function which assigns a score to each argument on the basis of its initial weight, the number of its direct attackers, and their degrees.

\begin{definition}[Weighted Card-based]\label{card-based}
    The weighted Card-based semantics is a gradual semantics $\Car$ s.t. for any $\WAS = (\A,\C,w)$ and $a \in \A$, $\Car_\WAS(a) = w(a)/(1+ |\Att^*(a)| + \frac{\Sigma_{b \in \Att^*(a)} \Car_\WAS(b)}{|\Att^*(a)|})$ if $\Att^*(a) \neq \emptyset$, otherwise $\Car_\WAS(a) = w(a)$.
\end{definition}

The weighted Max-based semantics ($\nameMax$) favors the quality of attackers over their number, i.e., the degree of an argument is based on its initial weight and the degree of its strongest direct attacker.

\begin{definition}[Weighted Max-based]\label{max-based}
    The weighted Max-based semantics is a gradual semantics $\Max$ s.t. any $\WAS = (\A,\C,w)$ and $a \in \A$, $\Max_\WAS(a) = w(a)/(1+ \max_{b \in \Att(a)} \Max_\WAS(b))$
\end{definition}

The usual setting starts from a weighted argumentation framework and utilizes a weighted gradual semantics to compute the acceptability degree. The approach of \cite{DBLP:conf/ijcai/OrenYVB22} has shown that we can instead start with the acceptability degree and the structure of the graph to infer the initial weights. \textit{But what if we are unsure about the acceptability degrees?} We explore this issue in the next section.

\section{Working with intervals}
\label{sec:CAF}

Humans often struggle to differentiate between an argument's inital weights and its final acceptability degree, as human reasoning implicitly considers additional background knowledge. Humans also struggle to pinpoint a specific final acceptability degree value. We thus propose a framework where one can indicate an interval for acceptability degrees, and where the framework then supports the identification of appropriate final acceptability degrees. This is shown in Figure \ref{fig:co-elicitation}. 

%
%
%

To elicit argument strengths, indivudals would first provide intervals (step 1).
%
If the input provided is rational (w.r.t. some weighted gradual semantics), then we refine the interval of linked arguments using the corresponding gradual semantics (step 2a). However, in the case where the appraisal is irrational (for a given gradual semantics), the user will have to re-do their appraisal with support from the framework (step 2b). Note that after the automatic refinement of the intervals, the user can continue to improve their input. Once users are satisfied with their acceptability degree intervals, they can then sample some possible valid initial weights.

\begin{figure}
    \centering
    \includegraphics[width=7.3cm]{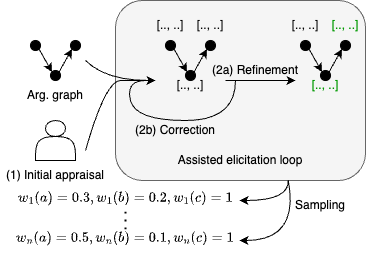}
    \caption{Assisted elicitation of argument strength.}
    \label{fig:co-elicitation}
\end{figure}

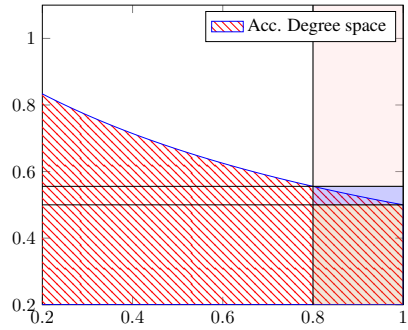
\begin{figure}
    \centering
    
\begin{tikzpicture}[scale=0.7,
        hatch distance/.store in=\hatchdistance,
        hatch distance=10pt,
        hatch thickness/.store in=\hatchthickness,
        hatch thickness=2pt
    ]
    \makeatletter
    \pgfdeclarepatternformonly[\hatchdistance,\hatchthickness]{flexible hatch}
    {\pgfqpoint{0pt}{0pt}}
    {\pgfqpoint{\hatchdistance}{\hatchdistance}}
    {\pgfpoint{\hatchdistance-1pt}{\hatchdistance-1pt}}%
    {
        \pgfsetcolor{\tikz@pattern@color}
        \pgfsetlinewidth{\hatchthickness}
        \pgfpathmoveto{\pgfqpoint{0pt}{0pt}}
        \pgfpathlineto{\pgfqpoint{\hatchdistance}{\hatchdistance}}
        \pgfusepath{stroke}
    }
    
\begin{axis}[xmin=0.2, ymin=0.2,xmax=1,ymax=1.1, samples=50]

    \addplot+[mark=none,
        domain=0:1,
        samples=100,
        pattern=north west lines,
        area legend,
        pattern color=red]{1/(1+x)} \closedcycle;

\addlegendentry{Acc. Degree space}

   \addplot[fill=brown, opacity=0.2] table {
      0.8 0 
      1 0 
      1 0.5
      0.8 0.5
    } -- cycle;

    \addplot[fill=blue, opacity=0.2] table {
      0.8 0.5 
      1 0.5 
      1 0.5556
      0.8 0.5556
    } -- cycle;

    \addplot[fill=pink, opacity=0.2] table {
      0.8 0.5556
      1 0.5556
      1 1.1
      0.8 1.1
    } -- cycle;
    
  \addplot +[black, solid, mark=none] coordinates {(0.8, 0) (0.8, 1.1)};
  \addplot +[black, solid, mark=none] coordinates {(1, 0) (1, 1.1)};
  \addplot +[black, solid, mark=none] coordinates {(0, 0.5) (1, 0.5)};
  \addplot +[black, solid, mark=none] coordinates {(0, 5/9) (1, 5/9)};
\end{axis}
\end{tikzpicture}

\caption{The area under the green curve represents the acceptability degree space for $a$ (x-axis) and $b$ (y-axis). The vertical lines are $x_1= 0.8$ and $x_2= 1$. The horizontal lines are at $y_1 = 0.5$ and $y_2= 5/9 \approx 0.555$.}
    \label{fig:ex1}
\end{figure}

\begin{definition}[Constrained Argumentation Framework]
A constrained argumentation framework (CAF) is $\CAF = (\A,\C,I)$, where $\AS = (\A,\C)$ is an (non-weighted) argumentation framework and $I$ is a function that associates an interval to each argument $a \in \A$  such that $I(a) \subseteq [0,1]$. 
\end{definition}

In our setting, an interval $I(a) = [0,1]$ for some argument $a$ effectively means that there is no restriction on $a$.

\begin{definition}
Given $\AS = (\A,\C)$ and semantics $\sigma$, where $\A = \{a_1, \dots, a_n\}$ and a point $P=(p_1,\ldots,p_n) \in [0,1]^n$ . We say that $P$ is in the \emph{acceptability degree space} w.r.t. $\sigma$ iff there is a weighting $w$ such that for every $a_i \in \A, \sigma_{(\A,\C,w)}(a_i)=p_i$.

We call the acceptability degree space (of $\AS$ w.r.t. $\sigma$) the set of all such points \cite{DBLP:conf/ijcai/OrenYVB22}.
\end{definition}

Let $\sigma$ be a gradual semantics and $\CAF =( \{a_1, a_2, \dots, a_n\},\C,I)$ be a constrained weighted argumentation framework.
We say that the pair $(\CAF, \sigma)$ is \emph{irrational} if there is no weighting $w: \A \to [0,1]$ such that for all $a_i \in \A$, $\sigma_{(\A,\C,w)}(a_i) \in I(a_i)$, and is \emph{rational} otherwise. 
We say that $(\CAF, \sigma)$ is \textit{fully rational} if for all $(d_1, d_2, \dots, d_n)$, where $d_i \in I(a_i)$ for all $ 1 \leq i \leq n$, there exists $w: \A \to [0,1]$ such that $\sigma_{(\A,\C,w)(a_i)} = d_i$.

\begin{example}
    Consider a CAF $(\{a,b\},\{(a,b)\},I)$. In Figure 
    \ref{fig:ex1}, we represent the acceptability degree space of $a$ and $b$ (below the blue curve) w.r.t. the weighted h-categorizer. For every point $(d_a, d_b)$ in this area, there exists a weighting function $w$ such that $\sigma_{(\A,\C,w)}(a) = d_a$ and $\sigma_{(\A,\C,w)}(b) = d_b$.
        
    Assume that $I(a)=[0.8,1.0]$. Let $U=5/9$ and $L=0.5$.
    \begin{itemize}
      \item If $I(b) \in [x,L]$, for $0 \leq x \leq L$, then the CAF is fully rational as for every $(d_a, d_b), d_a \in I(a),$ and $d_b \in I(b)$, there exists a weighting function $w$ s.t.\  $\sigma_{(\A,\C,w)}(a) = d_a$ and $\sigma_{(\A,\C,w)}(b) = d_b$.  Any such point $(d_a,d_b)$ will lie within the brown rectangle shown in Figure \ref{fig:ex1}. Thus, any point in $I(a) \times I(b)$ lies within the rectangle.
    
      \item If $I(b)=[x,y]$, for $L < y, 0 \leq x\leq U$ and $x \leq y$, then the CAF is rational, as the point $(0.8, x) \in I(a) \times I(b)$ is in the brown or blue area but it is not fully rational as the point $(1,y) \in I(a) \times I(b)$ is not in the acceptability degree space (but still lies inside the blue area).
      
     \item Finally, if $I(b)=[x,y]$ such that $U < x \leq y$, then the CAF is irrational because every point in $I(a) \times I(b)$ lies in the pink area (and thus outside of the green area).
    \end{itemize}

Note that intervals on acceptability degree cannot be directly converted into intervals for initial weights. This is due to the interdependencies between arguments in the final acceptability degree computation process. For example, if $I(a) = [0.8, 1]$ and $I(b) = [0,0.5]$, one would think that the initial weights of $a$ lies between $0.8$ and $1$ whereas the one of $b$ would lie within $0$ and $1$. Notice that if $w(a)=0.8$ and $w(b)=1$, we would have $\Hbs_\WAS(b) \notin I(b).$
\end{example}

If $(\CAF, \sigma)$ is fully rational, then there is a weighting for any final acceptability degrees within the intervals and is thus \emph{achievable}. In contrast, if $(\CAF, \sigma)$ is rational but not fully rational, then some combination of final acceptability degrees within the interval will not be achievable. $\epsilon$-rationality serves as a measure for ``\textit{how close to fully rational}'' a $(\CAF, \sigma)$ pair is.

\begin{definition} 
Given $\CAF= (\A,\C,I)$, a gradual semantics $\sigma$ and $0 \leq \epsilon $, we say $(\CAF, \sigma)$  is $\epsilon$-rational if for all $a \in \A$, it holds that:
\begin{itemize}
    \item $ \epsilon \leq \max(I(a)) - \min(I(a))$; and
    \item $((\A,\C,I_a) , \sigma)$ is rational, where for all $a' \in \A, a'\neq a$, $I_a(a')=I(a')$ and $I_a(a)=[max(I(a))-\epsilon,max(I(a))]$, 
\end{itemize}
\end{definition}

According to the first item of this definition, $\epsilon$ must be smaller or equal to the size of the smallest interval in the CAF. 
The second item of the definition considers a set of induced CAFs (one per argument) where for each CAF, a specific argument's interval is modified so that its lower bound is equal to its upper bound minus $\epsilon$ (and the remaining arguments are unchanged). If all such induced CAFs are rational, then --- in combination with the first condition --- the original CAF is $\epsilon$-rational.
Intuitively, this means that the shape $\times_{a \in \A} I(a)$ is not ``too far'' (i.e., within $\epsilon$) from the acceptability degree space on all axes.

Observe that a fully rational CAF must be $0$-rational, any $\epsilon$-rational (where $\epsilon >0$) CAF is rational (but not fully rational), while an irrational CAF is not $\epsilon$-rational for any $\epsilon$. Note that these are not "if and only if" relationships. Thus, a $0$-rational CAF is not necessarily fully rational, but is always rational.


\begin{proposition}
If ($\CAF, \sigma$) is $\epsilon$-rational, then ($\CAF, \sigma$) is rational.
\end{proposition}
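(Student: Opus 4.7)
The plan is to show that the witness weighting guaranteed by the second clause of $\epsilon$-rationality is already a witness for rationality of the original CAF. The whole argument hinges on the observation that the modified intervals $I_a$ are uniformly contained in the original intervals $I$.

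First I would unpack the two conditions of $\epsilon$-rationality. The first condition, $\epsilon \leq \max(I(a)) - \min(I(a))$, can be rearranged to $\max(I(a)) - \epsilon \geq \min(I(a))$, which immediately implies the containment
\[
I_a(a) \;=\; [\max(I(a))-\epsilon,\max(I(a))] \;\subseteq\; [\min(I(a)),\max(I(a))] \;=\; I(a).
\]
For every other argument $a' \neq a$ we have $I_a(a') = I(a')$ by construction, so in fact $I_a(a') \subseteq I(a')$ for all $a' \in \A$.

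Next I would pick any $a \in \A$ (the case $\A = \emptyset$ is vacuous since rationality holds trivially) and apply the second clause of the definition: the induced CAF $((\A,\C,I_a),\sigma)$ is rational, so there is a weighting $w : \A \to [0,1]$ with $\sigma_{(\A,\C,w)}(a') \in I_a(a')$ for every $a' \in \A$. Combining this with the containment $I_a(a') \subseteq I(a')$ established above yields $\sigma_{(\A,\C,w)}(a') \in I(a')$ for all $a' \in \A$, which is exactly what is required for $(\CAF,\sigma)$ to be rational.

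There is no real obstacle here: the proposition is essentially a sanity check verifying that the definition of $\epsilon$-rationality was set up so that its second clause is at least as strong as rationality. The only minor thing to be careful about is justifying that the modified interval $I_a(a)$ is non-empty and sits inside $I(a)$, which is precisely what the first clause of the definition buys us.
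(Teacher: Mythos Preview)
Your proof is correct and follows essentially the same approach as the paper: pick an arbitrary $a$, use the rationality of $((\A,\C,I_a),\sigma)$ to obtain a witness weighting $w$, and then observe via the first clause that $I_a(a)\subseteq I(a)$ so that this same $w$ witnesses rationality of $(\CAF,\sigma)$. Your presentation is slightly cleaner in that you state the containment $I_a(a')\subseteq I(a')$ uniformly for all $a'$ up front, but the argument is the same.
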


\begin{proof}
Let $\CAF = (\A,\C,I)$ and assume ($\CAF, \sigma$) is $\epsilon$-rational. For any arbitrary $a\in \A$, and $0 \leq \epsilon \leq \max(I(a))-\min(I(a))$, we have that $((\A,\C,I_a) , \sigma)$ is rational. 
This means that there exists $w : \A \to [0,1]$ such that $\sigma_{(\A,\C,w)}(a)  \in [ \max(I(a)) - \epsilon, \max(I(a))]$ and for all $b \in \A \setminus\{a\}, \sigma_{(\A,\C,w)}(b) \in I(b).$ 
Notice that  $ \min(I(a)) \leq \max(I(a)) - \epsilon \leq \sigma_{(\A,\C,w)}(a) \leq \max(I(a))$, thus $\sigma_{(\A,\C,w)}(a) \in I(a)$. Thus, ($\CAF, \sigma$) is rational.
\end{proof}

Rationality captures whether some part of the interval is in the acceptability degree space. A stronger concept is one of \emph{refinement}. When refining intervals, we seek to determine whether the intervals can be tightened without losing any part of the acceptability degree space.

\begin{definition}
Let $(\CAF = (\A, \C, I) , \sigma)$ be a rational pair of constrained argumentation framework and a semantics, we say that $\CAF' = (\A, \C, I')$ is a \emph{refinement} of $\CAF$ (w.r.t. $\sigma$) iff (1) for all $a \in \A$, $I'(a) \subseteq I(a)$ and (2) for all $w: \A \to [0,1]$ such that for all $a\in \A$,  $\sigma_{(\A,\C,w)}(a) \in I(a)$, then $\sigma_{(\A,\C,w)}(a) \in I'(a)$.
\label{def:refinement}
\end{definition}

Condition (2) is important and specifies that a refinement only removes unachievable acceptability degrees. A refinement thus shrinks intervals while ensuring that any achievable acceptability degrees remain. This means that a refinement can only reduce the upper bound of an interval.

\begin{proposition}
If ($\CAF, \sigma$) is rational and $\CAF'$ is a refinement of it, then ($\CAF', \sigma$) is rational.

\end{proposition}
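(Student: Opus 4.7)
The plan is essentially to chase definitions and reuse the very weighting that witnesses rationality of the original CAF.

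First, I would unpack what it means for $(\CAF, \sigma)$ to be rational: there exists a weighting $w : \A \to [0,1]$ such that, for every $a \in \A$, the semantics value $\sigma_{(\A,\C,w)}(a)$ lies in $I(a)$. This $w$ will be the sole object I need; no new weighting has to be constructed.

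Next, I would invoke condition (2) of Definition \ref{def:refinement} applied to $\CAF'$. The hypothesis of that condition is exactly that $\sigma_{(\A,\C,w)}(a) \in I(a)$ for all $a \in \A$, which is precisely what the witness $w$ from the previous step gives. The conclusion of condition (2) is then that $\sigma_{(\A,\C,w)}(a) \in I'(a)$ for all $a \in \A$. Hence the same weighting $w$ witnesses rationality of $(\CAF', \sigma)$.

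There is no genuine obstacle here; the proof is a direct two-step invocation of the definitions, so the only care needed is to state the chain clearly and to note explicitly that condition (1) of Definition \ref{def:refinement} is not even required for this argument (although it is of course consistent with the conclusion, since $\sigma_{(\A,\C,w)}(a) \in I'(a) \subseteq I(a)$).
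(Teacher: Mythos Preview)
Your proposal is correct and is essentially identical to the paper's own proof: take the witness weighting $w$ for the rationality of $(\CAF,\sigma)$ and apply condition (2) of Definition \ref{def:refinement} to conclude that the same $w$ witnesses rationality of $(\CAF',\sigma)$. The paper likewise does not use condition (1).
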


\begin{proof}
Assume that ($\CAF, \sigma$) is rational, then there is a weighting $w : \A \to [0,1]$ such that for all $a \in \A, \sigma_{(\A,\C,w)}(a) \in I(a).$ From the second item of Definition \ref{def:refinement}, we get that for all $a \in \A, \sigma_{(\A,\C,w)}(a) \in I'(a).$ This means that $(\CAF', \sigma)$ is rational.
\end{proof}

Next we show an important intermediate result which informs us on the shape of the ``acceptability degree space'' of argumentation graphs. 
We introduce the notion of \emph{axial-radiality} for a semantics. Intuitively an axial-radial semantics remains within the acceptability degree space if moving from a point within the space  towards the origin.

\begin{definition}
A weighted gradual semantics $\sigma$ is said to be obey \emph{axial-radiality} iff for any argumentation framework $\WAS= (\A, \C,w)$, corresponding $\sigma_\AS$ and $a\in \A$ then for all $\varepsilon \in [0, \sigma_\AS(a)]$, $((\A,\C, I), \sigma)$ is rational, where $I(a)= [\sigma_\AS(a) - \varepsilon, \sigma_\AS(a) - \varepsilon]$ and for all $b \in \A \setminus \{a\}, I(b) = [\sigma_\AS(b), \sigma_\AS(b)].$
\end{definition}

For semantics satisfying axial-radiality, the corner points of the hyper-rectangle induced by the intervals over arguments can be used to determine whether the pair CAF/semantics is or is not (fully) rational.

\begin{proposition}
Given $\CAF = (\A,\C, I)$, $\A = \{ a_1, \dots, a_n \}$, a semantics $\sigma$ satisfying axial-radiality, $Z_i = \{\min(I(a_i)), \max(I(a_i)) \}$ and $Z = Z_1 \times Z_2 \times \dots \times Z_n \subseteq [0,1]^n $ be the set of all corner points of the hyper-rectangle induced by $I$. It holds that:
\begin{itemize}
    \item If all points of $Z$ are in the acceptability degree space (w.r.t. $\sigma$), then $(\CAF,\sigma)$ is fully rational.
    \item If none of the points of $Z$ are in the acceptability degree space (w.r.t. $\sigma$), then $(\CAF,\sigma)$ is irrational.
    \item If $1 \leq m\leq 2^n-1$ points of $Z$ are not in the acceptability degree space (w.r.t. $\sigma$), then $(\CAF,\sigma)$ is rational but not fully rational.
\end{itemize}
\label{prop:points_interval}
\end{proposition}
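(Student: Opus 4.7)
The plan is to derive all three bullets from a single structural consequence of axial-radiality: the acceptability degree space is \emph{downward closed} under coordinate-wise $\le$. Concretely, if a point $P=(p_1,\dots,p_n)$ is in the acceptability degree space, then for every $Q=(q_1,\dots,q_n)$ with $0 \le q_i \le p_i$, $Q$ is also in the space. I would prove this by applying axial-radiality once per coordinate: starting from a weighting witnessing $P$, axial-radiality lets me lower $p_1$ to any $q_1 \in [0, p_1]$ while keeping the other coordinates fixed, yielding a fresh achievable point witnessed by some new weighting. That point then supports another application of axial-radiality on $a_2$, lowering $p_2$ to $q_2$; iterating $n$ times delivers $Q$. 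The only subtlety is that each intermediate point must itself be achievable so that axial-radiality remains applicable, which is immediate since each step produces an achievable point by definition.

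Let $T=(\max I(a_1),\dots,\max I(a_n))$ and $B=(\min I(a_1),\dots,\min I(a_n))$ denote the top and bottom corners of the hyper-rectangle $\prod_i I(a_i)$, and observe that $T$ coordinate-wise dominates every point of $Z$ and of the rectangle, while $B$ is dominated by all of them. For bullet~1, if every corner of $Z$ lies in the acceptability degree space then so does $T$, and downward closure makes every point of the rectangle achievable, i.e.\ $(\CAF,\sigma)$ is fully rational. For bullet~2, if no corner is achievable and yet some point $P$ of the rectangle were achievable, then $B \le P$ and downward closure would yield achievability of $B$, contradicting the assumption; hence no point of the rectangle is achievable, so $(\CAF,\sigma)$ is irrational.

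For bullet~3, assume $1 \le m \le 2^n - 1$ corners are not achievable. At least one corner $C$ \emph{is} achievable; since $B \le C$, downward closure yields achievability of $B$, and since $B$ lies in the rectangle this gives rationality. Conversely, at least one corner $C'$ is not achievable; if $T$ were achievable, then $C' \le T$ combined with downward closure would make $C'$ achievable, a contradiction. Thus $T$ is a point of the rectangle that is not achievable, so $(\CAF,\sigma)$ fails to be fully rational. I expect the main obstacle to be the clean formulation of the downward-closure step: axial-radiality is phrased per single coordinate, and one must carefully chain $n$ applications, re-invoking at each step the fact that the freshly produced point is achievable so the hypothesis of axial-radiality reapplies. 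After that, the three bullets collapse to the observation that the fate of the entire rectangle is controlled solely by the two extremal corners $T$ and $B$.
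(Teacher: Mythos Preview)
Your proof is correct and follows essentially the same approach as the paper: both rely on axial-radiality to obtain coordinate-wise downward closure of the acceptability degree space, then reduce the three bullets to the behaviour of the extremal corners $T$ and $B$. Your treatment is in fact more explicit than the paper's---you spell out the $n$-fold chaining needed for downward closure and give a cleaner argument for bullet~3 (the paper simply observes that an achievable corner witnesses rationality and an unachievable corner witnesses failure of full rationality, whereas you route through $B$ and $T$, which is slightly indirect but perfectly valid).
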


\begin{proof}
    The point $(\max(I(a_1)), \dots, \max(I(a_n))) \in Z$ is in the acceptability degree space and from the definition of axial-radiality, we conclude that every point $(d_1, d_2, \dots, d_n)$, where $d_i \in I(a_i)$, is in the acceptability degree space, thus, $(\CAF,\sigma)$ is fully rational.
    
    Assume that non of the points of $Z$ are in the acceptability degree space but that $(\CAF,\sigma)$ is rational. This means that there exists $P = (d_1, d_2, \dots, d_n)$, where $d_i \in I(a_i)$, such that $P$ is in the acceptability degree space. Using axial-radiality, we conclude that $(\min(a_1), \dots, \min(a_n))$ is also in the acceptability degree space, contradiction.
    
    Since one, but not all points, inside the acceptability space,  $(\CAF, \sigma)$ is rational but not fully rational.
\end{proof}

Item 3 of Proposition \ref{prop:points_interval} does not mean that there exists a non-trivial refinement. Intuitively, a refinement exists if (at least) an entire face of a $\CAF$ lies outside the acceptability degree space and at least one corner point lies within it. This is formalised as follows.

\begin{proposition}
Let $\CAF = (\A,\C, I)$, $\A = \{ a_1, \dots, a_n \}$, a semantics $\sigma$ satisfying axial-radiality, and $Z$ be the set of all corner points of the hyper-rectangle induced by $I$ such that $(\CAF, \sigma)$ is rational. There is a non trivial refinement of $\CAF$ (w.r.t. $\sigma$) iff there exists a set $X_i \subseteq Z$ composed of $2^{n-1}$ corner points such that every $(x_1, \dots, x_n) \in X_i$ is not in the acceptability degree space and $x_i = \max(I(a_i))$.
\label{cond:non-trivial-refinement}
\end{proposition}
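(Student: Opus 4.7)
I would prove the biconditional in two directions, leveraging (i) axial-radiality, which lets us transfer ``outside'' status from a corner to any dominating point in the componentwise order, and (ii) the observation recorded just after Definition~\ref{def:refinement} that a refinement can only lower upper bounds.

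For the forward direction ($\Rightarrow$) I argue by contrapositive. Assume that for every index $i$ the face $\{Q \in Z : Q_i = \max(I(a_i))\}$ contains at least one corner in the acceptability degree space, and let $\CAF'=(\A,\C,I')$ be any candidate non-trivial refinement. Non-triviality combined with the ``only upper bounds'' observation gives some $i$ with $\max(I'(a_i))<\max(I(a_i))$. By hypothesis there is a corner $Q$ with $Q_i=\max(I(a_i))$ lying in the acceptability degree space, realised by some weighting $w$. Since $Q\in I(a_1)\times\cdots\times I(a_n)$ but $Q_i\notin I'(a_i)$, this contradicts condition~(2) of Definition~\ref{def:refinement}, so no such refinement exists.

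For the reverse direction ($\Leftarrow$) I first extend the ``outside'' property from the $2^{n-1}$ corners of $X_i$ to the entire top face $F_i=\{P\in I(a_1)\times\cdots\times I(a_n):P_i=\max(I(a_i))\}$. Given $P\in F_i$, the corner $Q\in X_i$ with $Q_j=\min(I(a_j))$ for $j\neq i$ and $Q_i=\max(I(a_i))$ satisfies $Q\le P$ componentwise. Iterating axial-radiality one coordinate at a time shows that achievability propagates downward in the componentwise order; contrapositively, $Q$ being unachievable and $Q\le P$ force $P$ to be unachievable too. Hence no point of $F_i$ lies in the acceptability degree space. I then set $I'(a_i)=[\min(I(a_i)),u_i]$ with $u_i=\sup\{P_i:P\text{ achievable and }P\in I(a_1)\times\cdots\times I(a_n)\}$ and leave the other intervals unchanged; condition~(2) of Definition~\ref{def:refinement} holds by construction.

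The main obstacle is establishing the strict inequality $u_i<\max(I(a_i))$, which is what makes the refinement \emph{non-trivial}. Excluding only the face $F_i$ itself yields $u_i\le\max(I(a_i))$, and a priori achievable points could approach $\max(I(a_i))$ from below. This gap is closed by continuity of $\sigma$ in $w$, which holds for the semantics $\Hbs$, $\Max$ and $\Car$ studied here: the achievable set is the continuous image of the compact cube $[0,1]^{|\A|}$ intersected with the closed hyper-rectangle, hence compact, so the supremum $u_i$ is attained. Since $\max(I(a_i))$ is not attained (by the first step), we get $u_i<\max(I(a_i))$ and therefore $I'(a_i)\subsetneq I(a_i)$ as required.
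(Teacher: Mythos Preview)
Your argument follows the same two-direction skeleton as the paper's sketch: for $\Rightarrow$ you use (contrapositively) that a corner on the $i$-th face lying in the acceptability space would be lost by any shrinking of $\max(I(a_i))$, violating condition~(2) of Definition~\ref{def:refinement}; for $\Leftarrow$ you first push ``unachievable'' from the corners of $X_i$ to the whole face via the downward-closure consequence of axial-radiality, then build a strictly smaller upper bound for $I(a_i)$. The paper does the same, except that in the $\Leftarrow$ direction it simply asserts the existence of an $\varepsilon>0$ allowing the face to be shifted down while staying outside the acceptability space, without saying why such an $\varepsilon$ exists.

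Your compactness argument (achievable set $=$ continuous image of $[0,1]^{|\A|}$, intersect with the closed box, project to the $i$-th coordinate, supremum attained and therefore strictly below $\max(I(a_i))$) is a clean way to fill exactly this gap, and you are right that it needs continuity of $w\mapsto\sigma_{(\A,\C,w)}$, which is not part of the stated hypotheses but does hold for $\Hbs$, $\Max$, $\Car$. So your proof is correct and essentially the paper's, with the added value that you make explicit---and close---the one step the paper's sketch leaves unjustified. A minor remark: your face argument in fact only uses the single minimal corner of $X_i$ (the one with $\min(I(a_j))$ in all coordinates $j\neq i$); by axial-radiality that corner being out already forces every other corner of $X_i$ to be out, so the ``all $2^{n-1}$ corners'' hypothesis is stronger than what you actually need.
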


\begin{proof}
The sketch of the proof is as follows.
    $\Leftarrow$
Let us assume that $X_i$ exists such that none of the points in $X_i$ are in the acceptability degree space. Let $P$ be the bounded hyperplane defined by the points in $X_i$. We know that none of the points in $P$ are in the acceptability degree space because of axial-radiality (otherwise, one of the points in $X_i$ would be in the acceptability degree space). There exists $\varepsilon>0$ such that $X'_i = \{ (x_1, \dots, x_{i-1}, x_i - \varepsilon, x_{i+1}, \dots, x_n ) \mid (x_1, \dots, x_n) \in X_i\}$ and all points in $X'_i$ are also not in the acceptability degree space. It is easy to see that $\CAF' = (\A,\C, I')$, where $I'(a) = [\min(a), x_i - \varepsilon]$ if $a = a_i$ and $I'(a) = I(a)$ otherwise, is a non trivial refinement of $\CAF$.

    $\Rightarrow$
    Let assume that $\CAF = (\A,\C,I)$ has a non trivial refinement $\CAF' = (\A,\C,I')$ such that for all $a \in \A, I'(a) \subseteq I(a).$ Let $a_i$ be such that $I'(a_i) \subset I(a_i)$ (this exists as the refinement is non trivial). 
    Let $X_i =\{ (x_1, x_2, \dots, x_{i-1}, \max(I(a_i)), \dots, x_n) \mid $ where for any $j \neq i$, $ x_j \in \{ \max(I(a_j)), \min(I(a_j))\}\}$ be the face of the hyper-rectangle in direction $i$, none of the points of $X_i$ can be in the acceptability degree space, otherwise $\CAF' = (\A,\C,I')$ would not be a refinement.  
\end{proof}








%
Given a rational $(\CAF = (\A, \C, I) , \sigma)$ and two of its refinements $\CAF_1$ and $\CAF_2$, we say that $\CAF_1$ is a \textit{better refinement} than $\CAF_2$ iff $\CAF_1$ is a refinement of $\CAF_2$.

\subsection{Determining the rationality of CAFs}

In this section, we focus on several popular weighted gradual semantics defined in the literature identify properties of CAFs with regards to these semantics.

\begin{proposition}
Let $\CAF$ be an arbitrary CAF, if the pair $(\CAF , \Hbs)$ is a rational then $(\CAF , \Max)$ is rational.
\label{prop:hbs-to-max}
\end{proposition}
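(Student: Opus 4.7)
The plan is to transform a witnessing weighting for $\Hbs$-rationality into a witnessing weighting for $\Max$-rationality that achieves exactly the same acceptability-degree vector. Concretely, assume $(\CAF,\Hbs)$ is rational and fix $w:\A\to[0,1]$ with $\Hbs_{(\A,\C,w)}(a_i)\in I(a_i)$ for all $i$. Let $d_i := \Hbs_{(\A,\C,w)}(a_i)$. My goal is to produce $w':\A\to[0,1]$ whose $\Max$-degrees equal $d_i$ for every $i$; since $d_i\in I(a_i)$ already, this immediately yields rationality of $(\CAF,\Max)$.

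The key construction is to "reverse-engineer" $w'$ from the defining equation of $\Max$: set
\[
  w'(a_i) := d_i\Bigl(1+\max_{b\in\Att(a_i)} d_b\Bigr),
\]
with the convention that the max over an empty set is $0$. The first thing I would check is that $w'$ is indeed a weighting, i.e.\ $w'(a_i)\in[0,1]$. Nonnegativity is immediate, and the upper bound follows from the crude inequality $\max_{b\in\Att(a_i)} d_b \le \sum_{b\in\Att(a_i)} d_b$, which gives
\[
  w'(a_i)\;\le\; d_i\Bigl(1+\sum_{b\in\Att(a_i)} d_b\Bigr)\;=\;w(a_i)\;\le\;1,
\]
the middle equality being exactly the $\Hbs$ fixed-point equation at $a_i$. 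This is the one inequality that powers the argument, and it is really just the observation that $\Max$ "reaches further" than $\Hbs$ because its denominator is smaller.

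It then remains to verify that $\Max_{(\A,\C,w')}(a_i)=d_i$ for all $i$. By construction, the vector $(d_1,\dots,d_n)$ satisfies the defining system of $\Max$ under $w'$:
\[
  d_i \;=\; \frac{w'(a_i)}{1+\max_{b\in\Att(a_i)} d_b}
  \qquad\text{for all } a_i\in\A.
\]
So $(d_1,\dots,d_n)$ is a solution of the $\Max$ fixed-point equations under $w'$, and since $d_i\in I(a_i)\subseteq[0,1]$, it is a valid one. I expect this step to be the main obstacle: strictly speaking, to conclude $\Max_{(\A,\C,w')}(a_i)=d_i$ I need uniqueness of the $\Max$ fixed point, which is a standard property established in the gradual-semantics literature (e.g.\ Amgoud et al.) via a contraction/monotone-convergence argument. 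For acyclic graphs this is immediate by induction along the attack order; for the general case I would simply cite the uniqueness/convergence of $\Max$. Once uniqueness is invoked, $w'$ witnesses rationality of $(\CAF,\Max)$, completing the proof.
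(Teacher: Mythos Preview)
Your proposal is correct and follows essentially the same route as the paper: both start from a witnessing weighting $w$ for $\Hbs$, set $d_i=\Hbs_{(\A,\C,w)}(a_i)$, and construct $w'(a_i)=d_i\bigl(1+\max_{b\in\Att(a_i)} d_b\bigr)$ so that the $\Max$ degrees coincide with $d_i$. The paper outsources the existence of $w'$ (and the bound $w'(a_i)\le 1$) to a cited result and illustrates the same formula via a matrix computation in the subsequent example, whereas you spell out the construction directly and justify $w'(a_i)\le w(a_i)\le 1$ via the clean inequality $\max\le\sum$; your explicit treatment of the uniqueness issue for the $\Max$ fixed point is likewise something the paper leaves implicit in its citation.
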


\begin{proof}
Let $\CAF = (\A,\C, I)$ such that $(\CAF , \Hbs)$ is rational. This means that there exists $w : \A \to [0,1]$ such that for all $a\in \A$, $\Hbs_{(\A,\C,w)}(a) \in I(a)$. We show that we can craft a weighting $w' : \A \to [0,1]$ such that for all $\Max_{(\A,\C,w')}(a) = \Hbs_{(\A,\C,w)}(a)$ (and thus $\Max_{(\A,\C,w')}(a) \in I(a)$).
\end{proof}

Using Prop.\  5.6 of \cite{DBLP:journals/corr/abs-2203-01201}, we know that such weighting $w'$ exists and can be calculted efficiently using  matrix operations, as per the following example.

\begin{example}
\label{ex-hbs-to-max}
Let us consider the $\CAF$ displayed in Figure \ref{fig:prop1-ex}. The pair $(\CAF, \Hbs)$ is rational as the weighting $w$ such that for all $a \in \A, w(a)=1$ yields $\Hbs_{(\A,\C,w)}(a) = \Hbs_{(\A,\C,w)}(b) = 0.618$, $\Hbs_{(\A,\C,w)}(c) = 0.447$, and $\Hbs_{(\A,\C,w)}(d) = 0.691$.
A weighting $w'$, such that for all $a \in \A, \Hbs_{(\A,\C,w)}(a) = \Max_{(\A,\C,w')}(a)$, is defined via the vector $W'$, obtained using the following matrix operations:
$$W' = 
	\begin{bmatrix} 
	0.618 \\
	0.618\\
	0.447 \\
        0.691 \\
	\end{bmatrix} + 
 \begin{bmatrix} 
	0.618 & 0 & 0 & 0  \\
	0 & 0.618 & 0 & 0 \\
	0 & 0 & 0.447 & 0 \\
 0 & 0 & 0 & 0.691  \\
	\end{bmatrix}
 *\max(X)
	$$
where $X$ is the element-wise multiplication of two matrices and $\max(X)$ takes the largest element of each row of $X$.
$$X = 
	\begin{bmatrix} 
	0 & 1 & 0 &0 \\
	1&0&0&0\\
    1&1&0&0\\
    0&0&1&0\\
	\end{bmatrix} \odot
\begin{bmatrix} 
0.618 & 0.618 & 0.447 & 0.691 \\
0.618 & 0.618 & 0.447 & 0.691 \\
0.618 & 0.618 & 0.447 & 0.691 \\
0.618 & 0.618 & 0.447 & 0.691 \\
\end{bmatrix}
	$$

We obtain that $W'= (1,1, 0.723,1)^T$, meaning that $w'(a) = w'(b) = w'(d) = 1$ and $w'(c) = 0.723$.
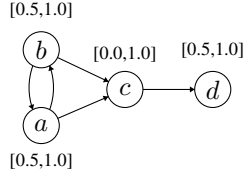
\begin{figure}
    \centering
    \begin{tikzpicture}[scale=0.08]
\tikzstyle{every node}+=[inner sep=0pt]
\draw [black] (26.8,-23.1) circle (3);
\draw (26.8,-23.1) node [label={[label distance=8] 90: \scriptsize  [0.5,1.0]}] {$b$};
\draw [black] (26.8,-35.7) circle (3);
\draw (26.8,-35.7) node[label={[label distance=8] 270: \scriptsize  [0.5,1.0]}] {$a$};
\draw [black] (40.7,-29.7) circle (3);
\draw (40.7,-29.7) node [label={[label distance=8] 90: \scriptsize  [0.0,1.0]}]{$c$};
\draw [black] (55.3,-29.7) circle (3);
\draw (55.3,-29.7) node [label={[label distance=8] 90: \scriptsize  [0.5,1.0]}]{$d$};
\draw [black] (25.408,-33.054) arc (-160.2119:-199.7881:10.792);
\fill [black] (25.41,-33.05) -- (25.61,-32.13) -- (24.67,-32.47);
\draw [black] (28.197,-25.744) arc (19.86919:-19.86919:10.758);
\fill [black] (28.2,-25.74) -- (28,-26.67) -- (28.94,-26.33);
\draw [black] (29.51,-24.39) -- (37.99,-28.41);
\fill [black] (37.99,-28.41) -- (37.48,-27.62) -- (37.05,-28.52);
\draw [black] (29.55,-34.51) -- (37.95,-30.89);
\fill [black] (37.95,-30.89) -- (37.01,-30.75) -- (37.41,-31.67);
\draw [black] (43.7,-29.7) -- (52.3,-29.7);
\fill [black] (52.3,-29.7) -- (51.5,-29.2) -- (51.5,-30.2);
\end{tikzpicture}

    \caption{Representation of a contrained argumentation framework.}
    \label{fig:prop1-ex}
\end{figure}
\end{example}

Note that the inverse of Proposition \ref{prop:hbs-to-max} is not true.

\begin{proposition}
\label{prop:shape_border}
Let $\sigma \in \{ \Hbs, \Max\}$, it holds that $\sigma$ satisfies axial-radiality.

\begin{proof}
From \cite{DBLP:journals/corr/abs-2203-01201}, we know that -- given a set of arguments and attacks between them -- there is a one-to-one equivalence between a vector of weights ($\overrightarrow{w}$) and a vector of acceptability degrees ($\overrightarrow{X}$). For the weighted h-categorizer, one can go from the latter to the former using the equation $\overrightarrow{w} = \overrightarrow{X}+\mathbb{M}\mathbb{A}\overrightarrow{X}$, where $\mathbb{M}$ is the diagonal matrix with the acceptability degrees and $\mathbb{A}$ is the adjacency matrix.
From this equation, one can check whether a vector of acceptability degrees is achievable or not by computing the vector of weights and checking if its values are between $0$ and $1$. 

Consider an  argumentation framework $\AS = (\A,\C,w)$ and corresponding $\Hbs$, vectors $\overrightarrow{w}$,  $\overrightarrow{X}$, and matrices $\mathbb{M}$, $\mathbb{A}$. 

Now, take any $a\in \A$ and $\varepsilon \in [0, \sigma_\AS(a)]$ and compute the vector $\overrightarrow{X'}$ where the value at the index corresponding to $a$ is updated to $\sigma_\AS(a) - \varepsilon$. Similarly, we obtain $\mathbb{M'}$. 
We can observe that $\overrightarrow{w'} = \overrightarrow{X'}+\mathbb{M'}\mathbb{A}\overrightarrow{X'}$ has positive values as $\overrightarrow{X'}$, $\mathbb{M'}$ and $\mathbb{A}$ only have positive values. Moreover, we have that $\overrightarrow{w'} \leq \overrightarrow{w}$, meaning that $\overrightarrow{w'}$ has values inferior or equal to $1$. We conclude that $((\A,\C, I), \Hbs)$ is rational. Using Proposition \ref{prop:hbs-to-max}, we conclude the same for $\Max$.
\end{proof}

\label{prop:const-space}
\end{proposition}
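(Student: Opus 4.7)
The plan is to exploit the explicit algebraic characterization of achievable acceptability degree vectors from \cite{DBLP:journals/corr/abs-2203-01201}, first handling $\Hbs$ directly and then transferring the result to $\Max$ via Proposition \ref{prop:hbs-to-max}.

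For $\Hbs$, I would start from the relation $\overrightarrow{w} = \overrightarrow{X} + \mathbb{M}\mathbb{A}\overrightarrow{X}$, where $\overrightarrow{X}$ is the vector of acceptability degrees, $\mathbb{M}$ is the diagonal matrix of those same degrees, and $\mathbb{A}$ is the adjacency matrix of the attack relation. This gives a bijection between weightings and achievable acceptability profiles. Given an achievable profile $\overrightarrow{X}$ associated with some $\WAS = (\A,\C,w)$, I would construct the perturbed profile $\overrightarrow{X'}$ obtained by replacing the coordinate for argument $a$ with $\sigma_\AS(a) - \varepsilon$ (leaving all others fixed), and simultaneously form $\mathbb{M'}$ by updating only the corresponding diagonal entry. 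The key step is to show that the recovered weight vector $\overrightarrow{w'} = \overrightarrow{X'} + \mathbb{M'}\mathbb{A}\overrightarrow{X'}$ lies componentwise in $[0,1]$.

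Nonnegativity is immediate since $\overrightarrow{X'}$, $\mathbb{M'}$, and $\mathbb{A}$ all have nonnegative entries. For the upper bound, I would argue by a coordinatewise monotonicity comparison: since $\varepsilon \geq 0$, we have $\overrightarrow{X'} \leq \overrightarrow{X}$ and $\mathbb{M'} \leq \mathbb{M}$ entrywise, so $\overrightarrow{w'} \leq \overrightarrow{w} \leq \overrightarrow{1}$. This certifies that $\overrightarrow{X'}$ is achievable under $\Hbs$, which is exactly the statement that $((\A,\C,I),\Hbs)$ is rational for the singleton-interval $I$ specified in the definition of axial-radiality.

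For $\Max$, the work is already done: by Proposition \ref{prop:hbs-to-max}, rationality of $((\A,\C,I), \Hbs)$ implies rationality of $((\A,\C,I), \Max)$, so the same singleton-interval CAF witnesses axial-radiality for $\Max$. The main obstacle I anticipate is the upper-bound check for $\Hbs$: one must be careful that the monotonicity is truly componentwise, in particular that reducing the $a$-entry of $\mathbb{M}$ only decreases the $a$-th row of $\mathbb{M'}\mathbb{A}\overrightarrow{X'}$ and that the other rows are not inflated by the reduction of $X_a$ (they can only decrease, since $\overrightarrow{X'} \leq \overrightarrow{X}$ and the off-diagonal structure of $\mathbb{M}$ is irrelevant). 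Once this componentwise comparison is laid out cleanly, the rest of the argument is routine.
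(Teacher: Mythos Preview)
Your proposal is correct and follows essentially the same approach as the paper: it uses the inversion formula $\overrightarrow{w} = \overrightarrow{X} + \mathbb{M}\mathbb{A}\overrightarrow{X}$ from \cite{DBLP:journals/corr/abs-2203-01201}, argues nonnegativity and the componentwise bound $\overrightarrow{w'} \leq \overrightarrow{w}$ for the perturbed profile, and then invokes Proposition~\ref{prop:hbs-to-max} to transfer the conclusion to $\Max$. Your added remark making the componentwise monotonicity explicit is a welcome clarification but does not change the structure of the argument.
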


The next proposition  answers the question \textit{how can we determine if a pair $(\CAF, \sigma)$ is rational/fully rational?}

\begin{proposition}
It holds that $((\A, \C, I) , \sigma)$ is rational iff $((\A, \C, I') , \sigma)$ is rational, where $\sigma \in \{\Hbs, \Max \}$ and for all $a \in \A$, $I'(a) = [\min(I(a)), \min(I(a))]$.
\label{prop:rational}
\end{proposition}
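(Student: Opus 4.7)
The plan is to prove both directions, with the forward direction ($I'$ rational $\Rightarrow$ $I$ rational) being essentially immediate, and the converse relying on an iterated application of axial-radiality, which we may invoke thanks to Proposition \ref{prop:shape_border}.

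For the easy direction, suppose $((\A,\C,I'),\sigma)$ is rational, so there is a weighting $w$ with $\sigma_{(\A,\C,w)}(a) \in I'(a) = \{\min(I(a))\}$ for every $a\in\A$. Since $\min(I(a)) \in I(a)$, the same $w$ witnesses rationality of $((\A,\C,I),\sigma)$.

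The main content lies in the converse. Suppose $((\A,\C,I),\sigma)$ is rational, witnessed by a weighting $w^{(0)}$ with $d_i^{(0)} := \sigma_{(\A,\C,w^{(0)})}(a_i) \in I(a_i)$ for each argument $a_i$, so in particular $d_i^{(0)} \geq \min(I(a_i))$. I would then process the arguments one by one: at step $i$, apply axial-radiality (Proposition \ref{prop:shape_border}) to the current weighting $w^{(i-1)}$, choosing $\varepsilon_i = d_i^{(i-1)} - \min(I(a_i)) \in [0, d_i^{(i-1)}]$, to obtain a new weighting $w^{(i)}$ such that $\sigma_{(\A,\C,w^{(i)})}(a_i) = \min(I(a_i))$ while $\sigma_{(\A,\C,w^{(i)})}(a_j) = \sigma_{(\A,\C,w^{(i-1)})}(a_j)$ for $j \neq i$. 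A simple induction shows that after all $n$ steps, $w^{(n)}$ satisfies $\sigma_{(\A,\C,w^{(n)})}(a_i) = \min(I(a_i))$ for every $i$, which is exactly the rationality of $((\A,\C,I'),\sigma)$.

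The only subtle point, which I would flag but expect to be clean, is that each intermediate call to axial-radiality is legitimate: axial-radiality is phrased relative to whatever $\AS = (\A,\C,w^{(i-1)})$ we start from, so the already-collapsed coordinates $a_1,\dots,a_{i-1}$ are simply kept fixed at their current (already-minimum) degrees when invoking the definition at step $i$. The axial-radiality guarantee delivers a valid weighting $w^{(i)} \in [0,1]^{|\A|}$ at each stage, so no additional verification is needed. This is the step I expect to be the main conceptual obstacle, since one might naively worry that decreasing several degrees simultaneously requires more than single-argument axial-radiality; the resolution is precisely that the procedure is sequential.
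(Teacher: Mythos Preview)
Your proposal is correct and follows essentially the same approach as the paper's own proof: both directions match, and for the nontrivial implication you iterate axial-radiality (Proposition~\ref{prop:shape_border}) argument by argument to collapse each coordinate to its minimum. Your write-up is simply more explicit about tracking the intermediate weightings $w^{(i)}$ and verifying that $\varepsilon_i \in [0, d_i^{(i-1)}]$, but the underlying argument is identical.
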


\begin{proof}%
    $\Leftarrow$
    Assume that $((\A, \C, I') , \sigma)$ is rational, then it is obvious that $((\A, \C, I) , \sigma)$ is also rational (by definition). 

    \item[$\Rightarrow$] We show that if $((\A, \C, I) , \sigma)$ is rational then $((\A, \C, I') , \sigma)$ is rational.
%
Assume that $\A = \{a_1, a_2, \dots, a_n\}$ and $w$ such that $\AS= (\A,\C,w)$ and $\sigma_\AS(a) \in I(a)$ for all $a \in \A$.
By applying Proposition \ref{prop:shape_border}, we obtain that $((\A,\C,I_1), \sigma)$ is rational, where $I_1(a_1) = [\min(I(a_1)), \min(I(a_1))]$ and for all $b \in \A \setminus \{a_1\}$, $I_1(b) = I(b)$. Repeating this process $n-1$ more times proves that $((\A, \C, I') , \sigma)$ is rational.
\end{proof}


\begin{corollary}
Let $((\A,\C, I))$ be a CAF such that for all $a \in \A, \min(I(a))= 0$. Then, for $\sigma \in \{\Hbs, \Max\}$, $((\A,\C, I), \sigma)$ is rational.
\end{corollary}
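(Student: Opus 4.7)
The plan is to invoke Proposition~\ref{prop:rational} to reduce the problem to a single point, and then exhibit an explicit weighting that realises that point. Since $\min(I(a)) = 0$ for every $a \in \A$, the proposition tells us that $((\A,\C,I),\sigma)$ is rational iff $((\A,\C,I'),\sigma)$ is rational, where $I'(a) = [0,0]$ for all $a$. So it suffices to produce a weighting $w:\A \to [0,1]$ such that $\sigma_{(\A,\C,w)}(a) = 0$ for every argument.

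The obvious candidate is the zero weighting $w(a) = 0$ for all $a \in \A$. For $\sigma = \Hbs$, the defining equation gives $\Hbs_{(\A,\C,w)}(a) = 0/(1 + \sum_{b \in \Att(a)} \Hbs_{(\A,\C,w)}(b)) = 0$, and analogously for $\sigma = \Max$ we have $\Max_{(\A,\C,w)}(a) = 0/(1 + \max_{b \in \Att(a)} \Max_{(\A,\C,w)}(b)) = 0$, regardless of the attackers' values (and with the usual convention when $\Att(a) = \emptyset$ since the numerator is $0$). Thus $\sigma_{(\A,\C,w)}(a) = 0 \in I'(a)$ for each $a$, so $((\A,\C,I'),\sigma)$ is rational, and therefore so is $((\A,\C,I),\sigma)$.

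There is essentially no obstacle: the only subtlety is verifying that the zero weighting is well-defined as a fixed point of the recursive definitions of $\Hbs$ and $\Max$, which is immediate because a zero numerator forces a zero value regardless of what the denominator evaluates to. The corollary is then a one-line consequence of Proposition~\ref{prop:rational}.
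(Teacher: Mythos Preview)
Your proof is correct and follows exactly the route the paper intends: the corollary is placed immediately after Proposition~\ref{prop:rational} without its own proof, and your argument simply spells out the intended one-line deduction---reduce via Proposition~\ref{prop:rational} to the single point $I'(a)=[0,0]$, then observe that the zero weighting realises it for both $\Hbs$ and $\Max$.
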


Figure \ref{fig:rationality} illustrates the intuition of Proposition \ref{prop:rational}. Each axis represent the values of the degree of one argument (the figure represents a scenario with 3 arguments). The blue shape is the acceptability degree space and the red shape corresponds to the intervals for each argument given by $I$. $((\A,\C,I), \sigma)$ is rational iff there is an intersection between the red shape and the blue shape. 
This happens iff the ``minimal'' corner of the red shape is inside the blue shape.

\begin{proposition}
$((\A, \C, I) , \sigma)$ is fully rational iff $((\A, \C, I') , \sigma)$ is rational, where $\sigma \in \{\Hbs, \Max \}$ and for all $a \in \A$, $I'(a) = [\max(I(a)), \max(I(a))]$.
\label{prop:rational2}
\end{proposition}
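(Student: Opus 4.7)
The plan is to mirror the structure of the proof of Proposition \ref{prop:rational}, but anchored at the \emph{maximal} corner of the hyper-rectangle rather than the minimal one, and to use axial-radiality (Proposition \ref{prop:shape_border}) to sweep from that corner down towards every other point of the interval product.

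For the $(\Rightarrow)$ direction, I would argue directly from the definition of full rationality. If $((\A,\C,I),\sigma)$ is fully rational then, for every tuple $(d_1,\dots,d_n)$ with $d_i \in I(a_i)$, there is a weighting $w$ with $\sigma_{(\A,\C,w)}(a_i)=d_i$. Picking in particular $d_i = \max(I(a_i))$ exhibits a weighting that witnesses rationality of $((\A,\C,I'),\sigma)$, where $I'(a)=[\max(I(a)),\max(I(a))]$.

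For the $(\Leftarrow)$ direction, which is the substantive part, I would use axial-radiality as an iterative coordinate-reduction argument. Suppose $((\A,\C,I'),\sigma)$ is rational; then the maximal corner $P^\ast = (\max(I(a_1)),\dots,\max(I(a_n)))$ lies in the acceptability degree space. Fix any $(d_1,\dots,d_n)$ with $d_i \in I(a_i) \subseteq [0,1]$; in particular $0 \le d_i \le \max(I(a_i))$. Starting from $P^\ast$, set $\varepsilon_1 = \max(I(a_1)) - d_1 \in [0,\max(I(a_1))]$ and apply Proposition \ref{prop:shape_border} to replace the first coordinate of $P^\ast$ by $d_1$ while keeping the others fixed; the resulting point is still in the acceptability degree space. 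Iterating on the remaining coordinates $a_2, \dots, a_n$ (which is legal at each step since the current point is achievable by some weighting) yields that $(d_1, \dots, d_n)$ is itself in the acceptability degree space. As $(d_1, \dots, d_n)$ was arbitrary in $\times_{a\in \A} I(a)$, this shows $((\A,\C,I),\sigma)$ is fully rational.

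The main obstacle is essentially bookkeeping: one has to make sure that axial-radiality as stated (which allows reducing a single coordinate from any achievable point) does indeed compose across all $n$ coordinates, and that the constraint $\varepsilon \in [0,\sigma_\AS(a)]$ is never violated --- but this follows since each new coordinate value $d_i$ satisfies $0 \le d_i \le \max(I(a_i))$ and the reduction happens from a point whose $i$-th coordinate is exactly $\max(I(a_i))$. Once Proposition \ref{prop:shape_border} is granted for both $\Hbs$ and $\Max$, no further semantics-specific computation is required.
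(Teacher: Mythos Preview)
Your proposal is correct and follows exactly the approach the paper intends: the paper's own proof simply reads ``The proof is similar to the one of Proposition~\ref{prop:rational},'' and your write-up is precisely that mirror argument anchored at the maximal corner, using axial-radiality (Proposition~\ref{prop:shape_border}) iteratively to reach every point of $\times_{a\in\A} I(a)$. The bookkeeping you flag (compositionality of the single-coordinate reductions and the $\varepsilon$-range condition) is handled correctly.
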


\begin{proof}
The proof is similar to the one of Proposition \ref{prop:rational}.
\end{proof}

\begin{figure}
    \centering
    \includegraphics[width=8.0cm]{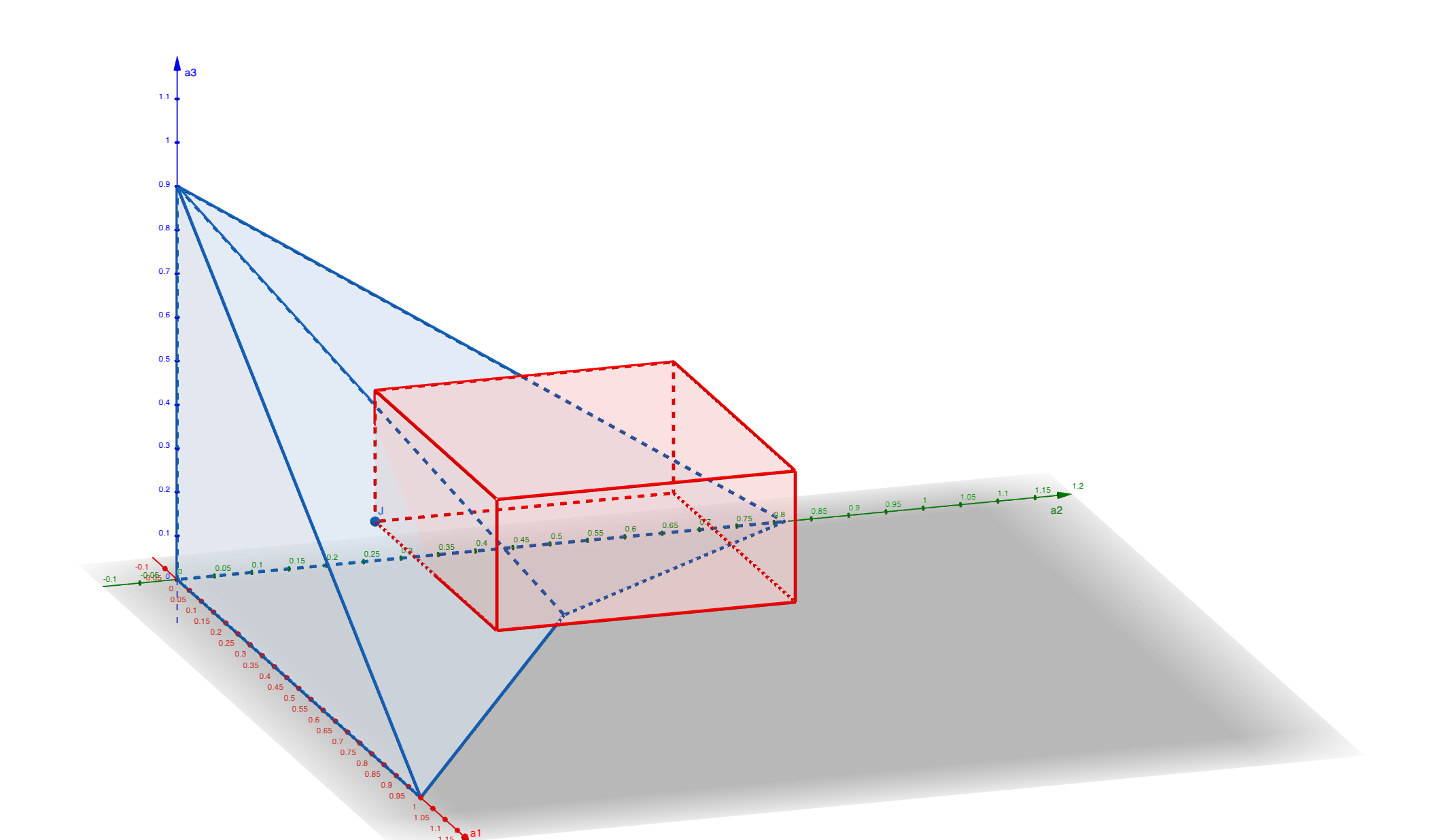}
    \caption{Representation of Proposition \ref{prop:rational}.}
    \label{fig:rationality}
\end{figure}


\subsection{Refining intervals}
\label{sec:refine-intervals}

We now consider the problem of obtaining the ``best'' refinement of a rational constrained argumentation frameworks w.r.t. some semantics. Such a best refinment is an $\epsilon$-rational refinement for which no elements of the acceptability degree space are lost, and which cannot be further refined without losing some elements of the acceptability degree space. Algorithm \ref{alg:refinement} describes how such a refinement can be identified.
Lines \ref{line:loop-alg1} to \ref{line:check_face} check that the condition of Proposition \ref{cond:non-trivial-refinement} is satisfied and that we can find a non-trivial refinement.
Lines \ref{line-start-modif-alg1} to \ref{line-end-modif-alg1} modify the argument intervals using the bisection method until we obtain an $\varepsilon$-refinement.
Note that the output is not fully-rational due to line \ref{output-cond-alg1}.
The use of the bisection method within the while loop means that the distance between $l$ and $r$ is halved at each iteration. Thus, the algorithm will converge in approximately $\log \varepsilon$ iterations.
Figure \ref{fig:best-refinement} illustrates the best refinement (orange) of a CAF (red).

\begin{algorithm}[t]
\begin{algorithmic}[1]

\Require A rational pair $((\A,\C,I), \sigma)$ and $\varepsilon>0$.
\For{$a\in \A$} \label{line:loop-alg1}

\State $\CAF_a \leftarrow ((\A,\C,I'), \sigma)$ where for all $b \in \A \setminus \{a\}, I'(b) = I(b)$ and $I'(a) = [\max(I(a)), \max(I(a))]$

\If{$ \CAF_a$ is irrational} \label{line:check_face}
    
    \State $l \leftarrow \min(I(a))$ \label{line-start-modif-alg1}, $r \leftarrow \max(I(a))$
    \State $\mathit{found} \leftarrow \mathit{False}$
    \While{$\neg \mathit{found}$}
        
        \State $m \leftarrow (l+r)/2$
        \State $\CAF'_a \leftarrow ((\A,\C,I''), \sigma)$ where for all $b \in \A \setminus \{a\}, I''(b) = I(b)$ and $I''(a) = [m,m]$
        \If{$\CAF'_a$ is irrational} \label{output-cond-alg1}
            \State $r \leftarrow m$
            \If{$(r-l)/2 < \varepsilon$}
                \State $I(a) \leftarrow [\min(I(a)), m]$
                \State $\mathit{found} \leftarrow \mathit{True}$
            \EndIf
        \Else
            \State $l \leftarrow m$
        \EndIf
        
    \EndWhile \label{line-end-modif-alg1}
    
\EndIf
\EndFor

\caption{Computing the best refinement}\label{alg:refinement} 
\label{alg1}

\end{algorithmic}

\end{algorithm}


\begin{figure}
    \centering
    \includegraphics[width=8.0cm]{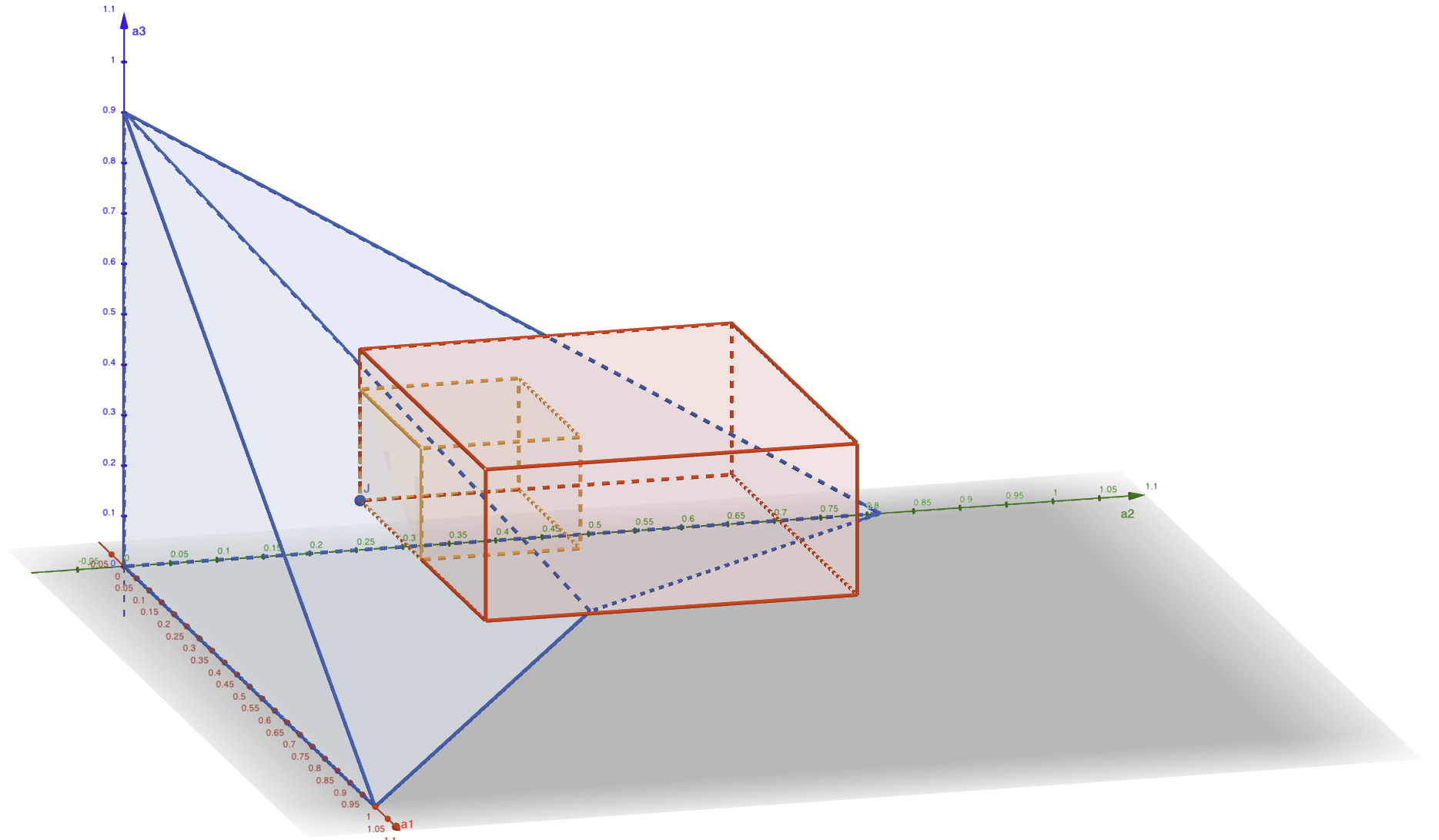}
    \caption{Representation of an $\varepsilon$-refinement.}
    \label{fig:best-refinement}
\end{figure}

\begin{proposition}
Given $(\CAF, \sigma$) s.t.~$\sigma$ is axial-radial, Alg.~\ref{alg1} outputs an $\varepsilon$-rational refinement $\CAF'$ of  $(\CAF, \sigma$).
\end{proposition}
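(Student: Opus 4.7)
The plan is to split the statement into two parts: (a) $\CAF'$ is a refinement of the original $\CAF$, and (b) $\CAF'$ is $\varepsilon$-rational. Both parts rely on axial-radiality together with the invariants maintained by the inner bisection loop.

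For (a), I would induct on the outer for-loop. The only update to $I$ shrinks some $I(a)$ to $[\min(I(a)), m]$, and this only happens after the algorithm has certified that $\CAF'_a$ with $I(a) = [m,m]$ (and the other intervals at their current values) is irrational. Suppose, for contradiction, that some weighting $w$ keeps every argument's degree inside the current intervals while $\sigma_{(\A,\C,w)}(a) > m$. Applying axial-radiality to $a$ with $\varepsilon' := \sigma_{(\A,\C,w)}(a) - m > 0$ yields a weighting $w'$ with $\sigma_{(\A,\C,w')}(a) = m$ and every other degree unchanged, which witnesses rationality of $\CAF'_a$ --- a contradiction. Hence no achievable combination of degrees is discarded, so the modification is a refinement in the sense of Definition \ref{def:refinement}. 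Refinements compose across iterations by transitivity of inclusion and of the implication on weightings, which yields (a).

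For (b), I would rely on two bisection invariants: at every iteration, $\CAF'_a$ with $I(a) = [l,l]$ is rational, and $\CAF'_a$ with $I(a) = [r,r]$ is irrational (apart from the entry case, where $r = \max(I(a))$). The first holds at entry via Proposition \ref{prop:rational}: the currently-refined CAF is rational (by part (a) combined with the earlier preservation of rationality under refinement), so collapsing $a$'s interval to its minimum preserves rationality. The if-branch preserves the second invariant and the else-branch preserves the first. At termination, the new upper bound $m$ equals $r$ and the exit test bounds $r - l$. Combining the rational witness at $l$ with axial-radiality propagates rationality of $[l,l]$ to every singleton $[l', l']$ with $l' \leq l$, so the rationality clause of $\varepsilon$-rationality follows whenever this segment meets $[\max(I(a)) - \varepsilon, \max(I(a))]$. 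The interval-width clause follows from the initial choice of intervals.

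The main obstacle is reconciling the bisection's precision with the $\varepsilon$ demanded by the definition: the exit condition bounds $(r-l)/2$ by $\varepsilon$, so at termination $r - l$ may be as large as $2\varepsilon$, while $\varepsilon$-rationality asks for a rational witness within distance $\varepsilon$ of the new upper bound. Bridging this gap using axial-radiality --- which extends a single rational witness downwards into an entire segment of rational singletons --- is the crux of (b); the remaining bookkeeping on the final interval widths is routine.
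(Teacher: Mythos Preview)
The paper states this proposition without proof, so there is nothing to compare against; what follows is an assessment of your argument on its own merits.

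Your part (a) is sound. The contradiction you run via axial-radiality is exactly the right mechanism: if a weighting places $\sigma(a)$ strictly above the cut value $m$ while all other degrees lie in their current intervals, axial-radiality manufactures a witness with $\sigma(a)=m$ and the other degrees unchanged, contradicting the recorded irrationality of $\CAF'_a$ at $[m,m]$. The composition-of-refinements observation is also correct.

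Part (b), however, does not close. You correctly isolate the obstacle: at termination the invariant gives you a rational witness at the singleton $[l,l]$ with $m-l<2\varepsilon$, whereas $\varepsilon$-rationality demands a witness whose $a$-degree lies in $[m-\varepsilon,m]$. Your proposed fix is to invoke axial-radiality, but axial-radiality only moves a degree \emph{downwards}: from the witness at $l$ you obtain rationality of every singleton $[l',l']$ with $l'\le l$, which is useless when $l<m-\varepsilon$. Nothing in the hypotheses lets you push the witness \emph{upwards} from $l$ into $[m-\varepsilon,m]$, and the exit test $(r-l)/2<\varepsilon$ genuinely permits $m-l$ to sit anywhere in $(\varepsilon,2\varepsilon)$. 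So the very gap you flag is not bridged by the tool you name. A correct argument here would need either a tightening of the exit test to $r-l<\varepsilon$, an additional monotonicity/continuity assumption that lets you lift witnesses, or a concession that the output is only $2\varepsilon$-rational.

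A second loose end: the width clause $\varepsilon\le\max(I'(a))-\min(I'(a))$ is not ``routine'' once the algorithm has shrunk $I(a)$. After termination the new width is $m-\min(I(a))\ge m-l$, but $m-l$ is only bounded above by $2\varepsilon$, not below by $\varepsilon$; in particular, if termination occurs on the very first bisection step (original width below $4\varepsilon$), the new width can drop below $\varepsilon$. You should either assume the input intervals are wide enough or argue this explicitly.
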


\subsection{Correcting irrationality}
\label{sec-correct-irrationality}

In this section, we consider the situation where a constrained argumentation framework under a given weighted gradual semantics is irrational. In this situation, we must determine how the minimal elements of the interval must be changed to achieve rationality. We  consider the following strategies.

\paragraph{Strategy 1}
We consider a strategy where we associate a cost with modifying an argument, according to the cost function $cost: \A \to \mathbb{N}^*$. To change an argument $a$ by $\Delta$ therefore costs $cost(a) * \Delta$. We can thus consider a line intersecting the point $S = (\min(I(a_1)), \dots, \min(I(a_n)))$ of the acceptability degree space with gradient vector $1/cost(a_i)$ for component $i$. Moving along this line has equivalent cost for all arguments. Thus, our strategy consists of starting at point $S$ and creating a new CAF (with interval function $I'$) whose minimum interval moves along the negation of the gradient until this new CAF is rational, or until $\min (I'(a_i))$ is equal to 0 (for some $i$). In the former case, we return the new CAF, while in the latter, we continue moving down while keeping $\min(I'(a_i))=0$.
This approach is described in Algorithm \ref{alg:strat2}, which uses the bisection method to efficiently traverse the gradient line. Lines \ref{comp-alg}-\ref{end-comp-alg} compute the start point of the line (noting that the end point lies at 1). In lines \ref{begin-bisection}--\ref{end-bisection}, we use the bisection method to identify the extreme point where the resultant intervals are rational, taking the possibility of $\min(I'(a))$ being below 0 into account (line \ref{update-CAF}). 







\begin{algorithm}[t]
\begin{algorithmic}[1]

\Require An irrational $((\A,\C,I), \sigma)$, s.t.~$\sigma$ satisfies axial-radiality and $\varepsilon>0$.

  \State $l \gets 1$, $u \leftarrow 1$  \label{comp-alg}
  \ForAll{$a \in \A$}
    \State $t \gets 1 - (1/costs(a))*\min(I(a))$
    \If{$t \leq l_0$}
      \State $l \gets t$
    \EndIf
  \EndFor \label{end-comp-alg}

\While{$u- l \geq \varepsilon$} \label{begin-bisection}

    \State $m \leftarrow  (l+u)/2$
    \State $\CAF' \leftarrow ((\A,\C,I'), \sigma)$ s.t.~$\forall a \in \A$, $I'(a) = [\max(0,\frac{m}{costs(a)}+ min(I(a)) - costs(a)), \max(I(a))]$ \label{update-CAF} 
    
    \If{$\CAF'$ is rational}
        \State $l \leftarrow m$
    \Else
        \State $u \leftarrow m$
    \EndIf
\EndWhile \label{end-bisection}

\State \Return $((\A,\C,I''),\sigma)$ where for every $a \in \A, I''(a) = [\max(0,l/costs(a)+\min(I(a))),\max(I(a))]$ \label{return-CAF-rational}

\caption{Computing rational CAF using strategy 1}\label{alg:strat2}
\end{algorithmic}

\end{algorithm}

\paragraph{Strategy 2}
exhaustively applies Strategy 1 but restricts it so that only the intervals of a subset of arguments $X \subseteq \A$ can be modified.
When all argument subsets have been explored, the cost minimising subset is selected. 

\paragraph{Other strategies}
We can consider two special cases of the previous strategies that do not require arguments costs. First, we may consider the case where all arguments have equal cost, i.e.,~ $costs(a_i)=1$, for all $1 \leq i \leq n $. Second, we may wish to move along the line segment between the origin and  $S$, in which case our gradient vector has $\min(I(a_i))$ as its $i$-th component. 
This latter approach has the benefit of no argument having 0 as its minimum element in the interval (except if all of them have 0 as the minimum interval element).
We note that such a strategy can be captured using an appropriate cost function and therefore do not consider these further.


The aim of all strategies is to identify the closest point to $S$ in the acceptability degree space. Cost aware strategies change the metric by which this distance is measured. The need for strategies arises as --- in general --- one cannot identify the closest rational point to another point analytically.





    
        
        
    

\begin{figure*}[t]
\centering
\begin{subfigure}{4.4cm}
  \centering
  \includegraphics[width=0.9\textwidth]{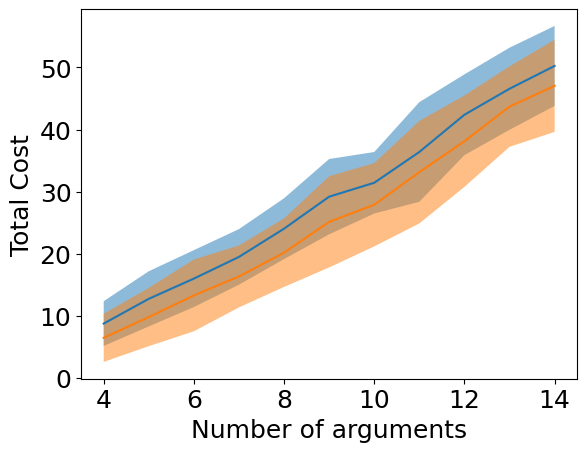}
  \caption{Weighted h-categorizer}
  \label{fig:h-cat-heuristic}
\end{subfigure}
\hspace{1cm}
\begin{subfigure}{4.4cm}
  \centering
  \includegraphics[width=0.9\textwidth]{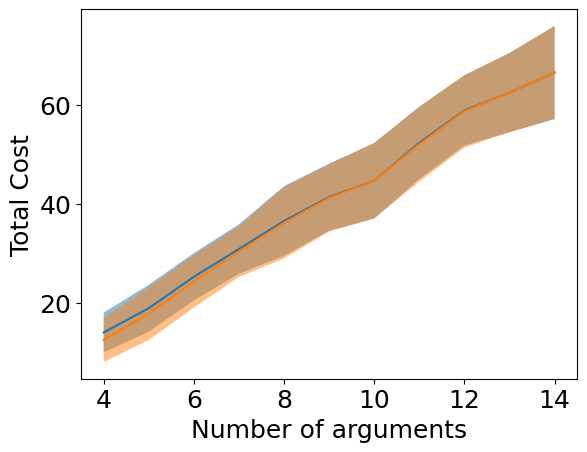}
  \caption{Weighted card-based}
  \label{fig:card-heuristic}
\end{subfigure}
\hspace{1cm}
\begin{subfigure}{4.4cm}
  \centering
  \includegraphics[width=0.9\textwidth]{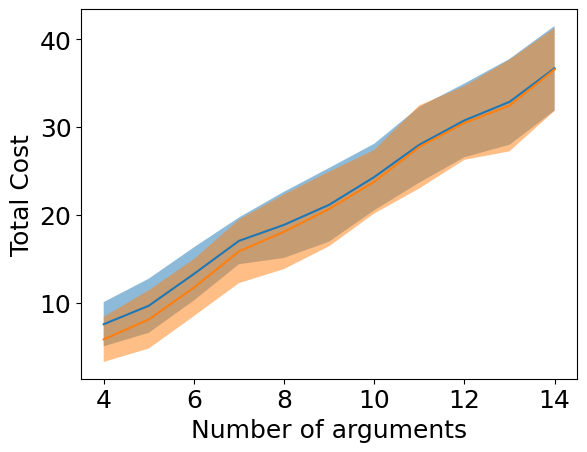}
  \caption{Weighted max-based}
  \label{fig:mbs-heuristic}
\end{subfigure}
\caption{\label{fig:eval1} Total costs obtained  applying Strategy 1 (blue) and Strategy 2 (orange). Shaded areas indicate one standard deviation.}
\end{figure*}

\begin{figure*}
\centering
\begin{subfigure}{4.4cm}
  \centering
  \includegraphics[width=0.9\textwidth]{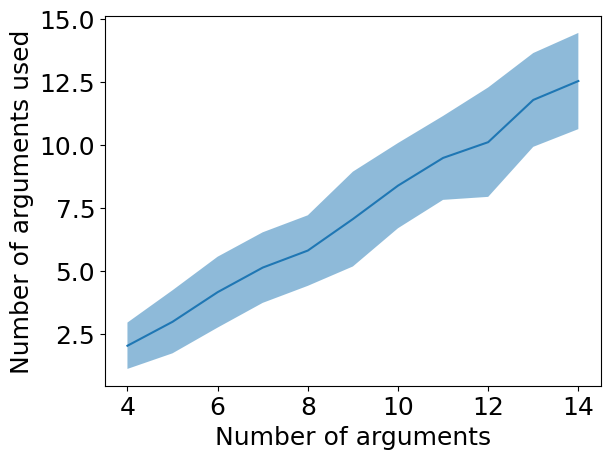}
  \caption{Weighted h-categorizer}
  \label{fig:h-cat-na}
\end{subfigure}
\hspace{1cm}
\begin{subfigure}{4.4cm}
  \centering
  \includegraphics[width=0.9\textwidth]{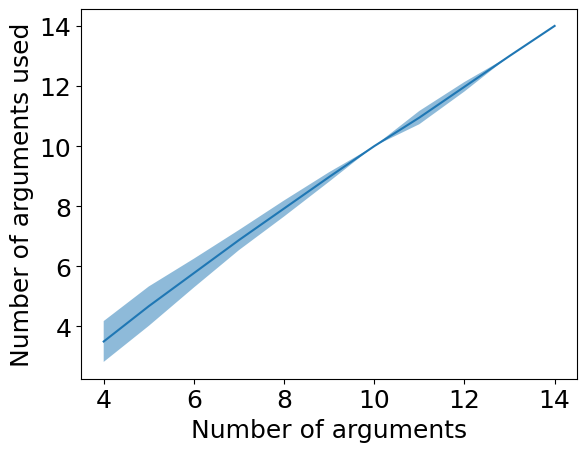}
  \caption{Weighted card-based}
  \label{fig:card-na}
\end{subfigure}
\hspace{1cm}
\begin{subfigure}{4.4cm}
  \centering
  \includegraphics[width=0.9\textwidth]{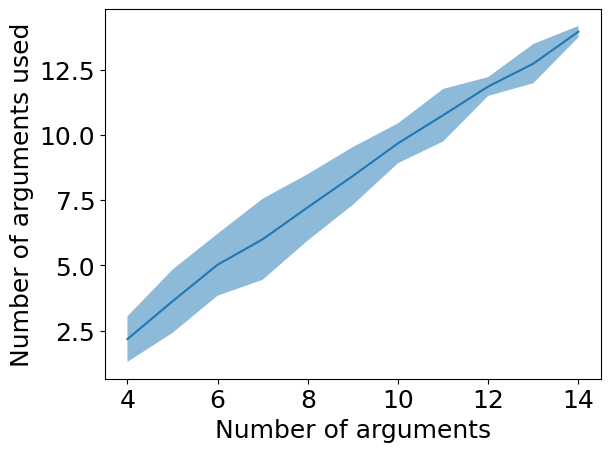}
  \caption{Weighted max-based}
  \label{fig:mbs-na}
\end{subfigure}

\caption{\label{fig:eval2} The total number of arguments used by Strategy 2 averaged across 40 runs, with the shaded area indicating one standard deviation.}
\end{figure*}

\section{Evaluation}
\label{sec:eval}
To evaluate the strategies from the previous section, 
%
we generated 40 Erd\"os-R\'enyi random graphs for each number of arguments between 4 and 14, with a 0.5 likelihood for edge creation. For each $(\A,\C)$, we initialised $I$ s.t., $I(a) = [x,1]$, where $x$ is picked randomly from $[0.8,1]$, 
ensuring that $((\A,\C,I), \sigma)$ is irrational according to the chosen semantics. 
Argument costs were drawn from the interval $[1,11]$.

Figure \ref{fig:eval1} illustrates the  cost for our cost-aware strategies using the weighted h-categorizer, card-based and max-based semantics. 
For the weighted h-categorizer semantics, Strategy 2 outperforms Strategy 1, while the advantage of the former strategy shrinks for the weighted card-based and max-based semantics as the number of arguments grows. Figure \ref{fig:eval2} plots the total number of arguments modified to achieve minimal cost via Strategy 2.
In all cases, the number of arguments grows linearly. However, we can see that the variance in the number of arguments modified remains large for the weighted h-categorizer, while shrinking for weighted max-based semantics. For the weighted card-based semantics, the variance begins small, and quickly approaches $0$ as the number of arguments increases. 
We hypothesise that the explicit consideration of the number of attacks in  weighted-card based semantics reduces the impact of arguments' initial weight on rationality, while the use of only maximally acceptable arguments in max-based semantics causes a similar effect.
%
%
The exponential complexity of Strategy 2 makes it unsuitable for large CAFs. Our results suggest that  for the weighted card-based and max-based semantics, the impact of using Strategy 1 is small, and that for the weighted h-categorizer semantics Strategy 1 performs well.


\section{Related work}
\label{sec:related_worl}

In \cite{DBLP:conf/nmr/EspinozaNT23}, the authors introduce credal support argumentation frameworks where each argument is associated with a credal set and an imprecise base score obtained from its credal set. While they only consider a support relation, they show how to compute the imprecise strength of arguments (as an interval) and study  theoretical properties.
The epistemic approach to probabilistic argumentation \cite{hunter21probabilistic} aims to determine valid probabilities for arguments given some properties, similar to our intervals. In the context of fuzzy argumentation, legal argument weights can be computed according to the approach of \cite{DBLP:conf/comma/WuLON16}. However, in all these cases, the properties of the interval are not explicitly considered, and neither is correcting systems which do not comply with the underlying properties.
Enforcement in abstract argumentation (e.g., \cite{baumann:hal-03541704}) considers how argumentation frameworks can be modified to guarantee an argument's status. Our refinements can be viewed similarly in a gradual semantics context.

\section{Conclusions and Future Work}
\label{sec:conclusion}

In this paper, we use gradual semantics to help align user beliefs as intervals of acceptability degrees for each argument.  
From an initial set of acceptability degree intervals, we introduced refinment  (i.e., tighten the bounds of the interval without losing any valid solutions) and rationalization (i.e., modify bounds to include valid solutions when these were not originally present). 
The latter was achieved via cost aware heuristics which, while not optimal, perform well empirically.

Our work utilizes weighted gradual semantics and intervals on argument acceptability degrees to elicit initial weights. This can enhance data collection in argumentation graphs, with applications to machine learning training; unlike current approaches that depend on synthetic data to represent initial weights, our approach offers a more realistic and potentially more accurate alternative \cite{DBLP:conf/icaart/AnaissySSV24}.

We lay the groundwork for several avenues of future work. First, we intend to investigate applications of our formalism. For example, the International Panel on Climate Change\footnote{\url{https://www.ipcc.ch/}} 
specifies intervals in their documents around the impacts of climate change (e.g., "very likely" means that there is a 90-95\% likelihood of an outcome occuring). Such intervals can serve as acceptability degrees intervals for conclusions, and encoding the data using our formalism would give insight into which semantics human use when reasoning in this context (by analyzing the semantics that minimizes irrationality). Human-based evaluation can provide insight here. 
Related to this, we plan to investigate how to sample the ``best'' valid initial weights, and present/explain them to a user, allowing them to identify those that they wish to use. 
We also wish to extend our formalism by extending intervals to various probability distributions  to better capture human reasoning. 

\bibliographystyle{unsrt}
\bibliography{mybibfile}

\end{document}